\documentclass[letterpaper]{article}
\usepackage{aaai2027}  
\usepackage[hyphens]{url}  
\usepackage{graphicx} 
\usepackage{natbib}  
\usepackage{caption} 
\usepackage{amsmath,amssymb}
\usepackage{amsthm}
\newtheorem{proposition}{Proposition}
\newtheorem{assumption}{Assumption}
\usepackage{algorithm}
\usepackage{algorithmic}

\usepackage{newfloat}
\usepackage{listings}
\DeclareCaptionStyle{ruled}{labelfont=normalfont,labelsep=colon,strut=off} 
\floatstyle{ruled}
\newfloat{listing}{tb}{lst}{}
\floatname{listing}{Listing}

\usepackage{booktabs}

\title{SSPO: Structure-Aware Similarity-Weighted Preference Optimization for Neural Combinatorial Optimization}
\author{
    \begin{tabular}{c}
    Yuanyu Li\textsuperscript{\rm 1}\equalcontrib,
    Jintao Xu\textsuperscript{\rm 1}\equalcontrib,
    Zijiang Liu\textsuperscript{\rm 1}\equalcontrib,
    Yongzhi Qi\textsuperscript{\rm 1}\thanks{Corresponding authors.},\\
    Ningxuan Kang\textsuperscript{\rm 1},
    Jianshen Zhang\textsuperscript{\rm 1},
    Wei Qi\textsuperscript{\rm 2}\textsuperscript{\rm \dag},
    Chen Xie\textsuperscript{\rm 1},
    Zuo-Jun Max Shen\textsuperscript{\rm 3}
    \end{tabular}
}
\affiliations{

\textsuperscript{\rm 1}Supply Chain Tech Team Y, JD.com, Beijing, China\\
\textsuperscript{\rm 2}Department of Industrial Engineering, Tsinghua University, Beijing, China\\
\textsuperscript{\rm 3}Faculty of Engineering and Faculty of Business and Economics, The University of Hong Kong, Hong Kong, China\\
    \{liyuanyu6, xujintao.3014, liuzijiang3, qiyongzhi1, kangningxuan, zhangjianshen, xiechen\}@jd.com,\\
    qiw@tsinghua.edu.cn, maxshen@hku.hk

}

\begin{document}

\nocopyright
\maketitle

\begin{abstract}
\begin{quote}
Neural combinatorial optimization (NCO) relies on parallel solution
sampling for training, yet existing methods fail to fully exploit
the rich information latent in a co-sampled solution group.
Preference-optimization methods anchor on the single
best solution and discard fine-grained quality and structural signal
from all other peers---a failure we term \emph{gradient signal
polarization}.
Mean-based baselines instead weight peers
uniformly, so structurally near-identical peers flood the baseline
with redundant information and keep gradient variance high---a
failure we term \emph{baseline redundancy}.
We propose \textbf{SSPO} (Structure-Aware Similarity-Weighted
Preference Optimization), which scores all $B$ sampled solutions
jointly through a dissimilarity-weighted leave-one-out baseline:
structurally distinct peers receive higher weight, resolving both
failures in a single mechanism. The baseline uses zero-parameter,
problem-adaptive solution embeddings built from the encoder's
existing node representations.
Experiments on TSP, EFL, and JSP benchmarks show consistent gains over prior best-anchor and uniform-weight baselines. A direct comparison against uniform RLOO on TSP and EFL confirms that structure-aware weighting is  the primary driver of improvement.
The SSPO-trained EFL policy has been deployed in
a production facility-location system at JD$\mathord{.}$com, confirming practical viability at scale.

\end{quote}
\end{abstract}

\section{Introduction}

Combinatorial optimization problems (COPs)---such as the Traveling
Salesman Problem (TSP) \citep{dantzig1954tsp}, Job-Shop Scheduling (JSP) \citep{Helga2018jsp}, and the
Entity Facility Location (EFL) problem---are key to logistics,
scheduling, and resource allocation.
Traditional exact solvers (e.g., SCIP\footnote{https://www.scipopt.org/}, Gurobi\footnote{https://www.gurobi.com/})
guarantee optimality but scale poorly to large instances.
Neural combinatorial optimization
(NCO)~\cite{bello2016neural,kool2019attention} learns
solution-construction policies directly from data, achieving near-optimal
quality with orders-of-magnitude faster inference.
A hallmark of modern NCO training is \emph{parallel solution sampling}:
for each instance, $B$ candidate solutions are drawn from the current
policy simultaneously, and their collective cost signal is used to
update the policy parameters.
How to maximally exploit the information contained in this co-sampled
group is therefore central to training efficiency---yet existing methods
leave substantial signal on the table through two complementary failure
modes.

\paragraph{Failure Mode 1: Gradient Signal Polarization.}
Self-labeling methods and preference-optimization
approaches such as BOPO~\cite{liao2025bopo} treat the single best solution in a group as a
privileged ``anchor'' and cast every other sample as an undifferentiated
negative.
All training signal is thus concentrated at one extreme of the quality
spectrum.
The fine-grained quality gradients that exist \emph{among} the
non-best solutions---and the structural differences that explain why
one non-best solution is better than another---are completely
discarded.
When $B$ is large (e.g., $B=256$), the wasted information is immense.
This is a specific instance, in the NCO parallel-sampling regime, of
the well-known best-vs-rest information-loss problem studied in
pairwise preference learning and RLHF; what is particular to NCO is
that the discarded ``losers'' carry rich \emph{structural} signal
(which edges were chosen, which facilities were selected) that no
existing method has exploited.

\paragraph{Failure Mode 2: Baseline Redundancy and High Variance.}
Another family of methods avoids the best-anchor bias by incorporating
all $B$ solutions into the baseline via their average cost---whether
as a simple group mean or a leave-one-out variant---but weights them
\emph{uniformly} regardless of their structural relationships.
For NCO problems, co-sampled solutions often share strong topological
similarity, especially early in training when the policy is
near-deterministic.
This uniform weighting floods the baseline with redundant information
from near-identical peers, yielding a less discriminative control variate and leaving gradient variance persistently high.
The general phenomenon---that correlated samples weaken the
variance-reduction power of a control variate---is classical in Monte
Carlo estimation; our contribution is to show that in NCO it can be
diagnosed and \emph{repaired} cheaply using structural information
that is already present in the encoder's hidden states.

\subsection{Our Approach: SSPO}
Both failure modes share a common root: existing methods are
\emph{structure-blind}, either ignoring the structure of non-best
solutions entirely (polarization) or equating structurally identical and diverse solutions (redundancy).

We propose \textbf{SSPO} (Structure-Aware Similarity-Weighted
Preference Optimization) to address both problems simultaneously with
a single, unified mechanism.
The key insight is that a peer solution which is \emph{structurally
different} from the current solution represents a genuinely alternative
strategy: its cost provides far richer information about the policy
landscape than a near-duplicate.
SSPO therefore:
\begin{itemize}
  \item \textbf{Scores all $\boldsymbol{B}$ solutions jointly}---beyond simple best-vs-rest comparisons---preserving fine-grained quality differences among non-best solutions in the gradient signal.
  \item \textbf{Weights each peer by structural dissimilarity}, assigning
    higher baseline weight to topologically distinct solutions, so that informative contrast is amplified
    and redundant similarity is suppressed.
\end{itemize}
Concretely, SSPO builds \emph{zero-parameter} solution embeddings from the encoder's existing node representations: Hadamard products of node pairs for edge-structured problems such as TSP, and mean-pooling over solution-relevant selected nodes for problems whose solutions are represented by selected or allocated node sets, such as EFL and JSP. It then centers these embeddings to remove shared global bias and derives dissimilarity weights $w_{ij} \propto (1 - S_{ij})$, where $S_{ij}$ denotes the cosine similarity between solutions $i$ and $j$. Since $S_{ii}=1$, the self-weight $w_{ii}=0$ follows automatically, excluding the reference cost from its own baseline and preserving the leave-one-out construction.
When all solutions are structurally identical, $w_{ij}$ degenerates
uniformly to $1/(B-1)$, recovering standard RLOO as a special case.

\subsection{Contributions}

\begin{enumerate}
  \item Current NCO training suffers from two failure modes---%
  \emph{gradient signal polarization} and \emph{baseline redundancy}---%
    both caused by structure-blindness. We provide a formal diagnosis of each.
  \item We propose \textbf{SSPO}, a structure-aware, dissimilarity-weighted
    leave-one-out baseline that simultaneously addresses both failure
    modes: it uses \emph{all} $B$ solutions (no polarization) and
    up-weights structurally dissimilar peers (no redundancy). Structural
    similarity is measured via \emph{problem-adaptive},
    zero-parameter solution embeddings---Hadamard node-pair products
    for TSP and mean-pooling over selected nodes for EFL and JSP---%
    that fully reuse the encoder's existing representations, so SSPO retains the leave-one-out construction and introduces no additional learnable parameters.
  \item Experiments on TSP, EFL, and JSP benchmarks show consistent
    gains over prior best-anchor and uniform-weight baselines. A
    direct comparison against uniform RLOO on TSP and EFL isolates
    structure-aware weighting as the primary driver of improvement.
\end{enumerate}

\section{Related Work}

\subsection{Neural Combinatorial Optimization}

Pointer Networks~\cite{PointNetwork} pioneered autoregressive
construction policies for COPs under supervised learning.
The Attention Model (AM)~\cite{kool2019attention} established the
transformer-based encoder--decoder paradigm with REINFORCE training.
POMO~\cite{kwon2020pomo} amplifies solution diversity at inference by
using $B$ different start nodes, and adopts the group mean cost as a
shared baseline; however, this baseline is biased and still aggregates
all $B$ solutions with uniform weight.
Graph convolutional networks~\cite{joshi2019efficient} and
neural--heuristic hybrids~\cite{xin2021neurolkh} provide complementary
approaches.
None of the above exploit the \emph{structural relationships} among
co-sampled solutions as a training signal.

\subsection{Gradient Signal Polarization: Best-Anchor Methods}

A prominent line of work treats the best solution in a sampled group
as a privileged supervisory signal.
Self-labeling / self-improvement methods use the
best candidate as a pseudo-label and train the policy to imitate it,
discarding all other samples.
Preference-optimization approaches~\cite{pan2025preference,liao2025bopo} form pairwise
winner--loser pairs where the best solution is always the
``winner'' and every other solution is an undifferentiated ``loser.''
Both paradigms \emph{polarize} the training signal: the entire
learning signal collapses onto a single best-anchor solution, while
the fine-grained quality gradients and structural variation among the
remaining $B{-}1$ solutions are wholly wasted.
As $B$ grows, the discarded information becomes increasingly
significant.
SSPO avoids polarization entirely by assigning a meaningful,
differentiated weight to \emph{every} solution in the group.

\subsection{Baseline Redundancy: Uniform-Weighting Methods}

Methods that aggregate all $B$ solutions via a mean-based baseline
avoid the best-anchor bias, but weight every peer solution
\emph{equally} regardless of structural relationships.
When co-sampled solutions are topologically near-identical---a common
occurrence early in training or with greedy decoders---the
uniformly-weighted baseline accumulates redundant information from
structurally equivalent peers.
The resulting control variate has low discriminative power, and
gradient variance remains persistently high.
SSPO breaks the uniformity assumption while retaining the
leave-one-out construction: structurally dissimilar peers
receive higher weight, while the reference cost is excluded
from its own baseline.

\subsection{Leave-One-Out Baselines and Group-Relative Methods}
\label{sec:rw_rloo}

Leave-one-out (LOO) baselines have a long history as low-variance,
unbiased control variates for REINFORCE.
Kool et al.~\cite{kool2019buy} introduced the
\emph{RLOO} estimator, in which the baseline for sample $i$ is the
\emph{uniform} mean of the costs of the remaining $B{-}1$ peers.
Ahmadian et al.~\cite{ahmadian2024back} recently rediscovered RLOO in
the LLM RLHF setting and showed it is a competitive, simpler
alternative to PPO for language-model alignment.
\emph{Group Relative Policy Optimization}
(GRPO)~\cite{shao2024deepseekmath}, popularized by DeepSeek-Math/-R1~\citep{guodeepseekr12025},
follows the same template: it groups $B$ rollouts per prompt and
forms each sample's advantage relative to the \emph{uniform} mean and
variance of its peers.
POMO's group-mean baseline~\cite{kwon2020pomo}, while biased (it
includes the sample's own cost), is structurally identical: all peers
contribute with equal weight.

These methods all instantiate a common template,
\(b_i = \sum_{j \neq i} w_{ij}\,c_j\) with
\(w_{ij} = 1/(B{-}1)\),
and their effectiveness as control variates degrades whenever peers
are strongly correlated---precisely the regime NCO operates in, where
co-sampled solutions are often topologically near-identical.
SSPO can be seen as a structure-dependent generalization of this
template: it keeps the leave-one-out form but replaces the uniform weight with $w_{ij} \propto (1-S_{ij})$, where $S_{ij}$ is computed from
solution structure rather than cost.
To the best of our knowledge, SSPO is the first method to make leave-one-out weights \emph{structure-dependent} in the NCO setting.

\subsection{Structural Similarity as a Training Signal}

Structural diversity metrics (Hamming distance, Jaccard edge overlap)
are standard in population-based metaheuristics~\cite{helsgaun2017extension}
for maintaining solution variety, but are used at \emph{inference} or
\emph{search} time, not during policy gradient training. SSPO instead brings this structural signal into policy gradient training, using it \emph{online} to reweight the baseline for variance reduction.

\section{Problem Formulation}

\subsection{Graph-Based Optimization Instance}

We represent a COP instance as a graph
$\mathcal{G} = (V, E, \mathbf{X}, \mathbf{A})$,
where $V$ is the set of $N$ decision nodes,
$\mathbf{X} \in \mathbb{R}^{N \times d_x}$ are node features
(e.g., coordinates, demands), and
$\mathbf{A} \in \mathbb{R}^{N \times N \times d_a}$ are edge features
(e.g., pairwise distances).

\subsection{Learning Objective}
A policy $\pi_\theta$ autoregressively constructs a solution
$\mathbf{s} = (s_0, s_1, \ldots, s_{T-1})$ by selecting actions from
the feasible set at each step.
The training objective is to minimize the expected solution cost:
\begin{equation}
  \mathcal{J}(\theta) = \mathbb{E}_{\mathbf{s} \sim \pi_\theta}
  \bigl[c(\mathbf{s})\bigr],
\end{equation}
where $c(\mathbf{s})$ is a problem-specific cost.
For minimization problems $c(\mathbf{s})$ is the objective directly
(e.g., total tour length for TSP, makespan for JSP);
for maximization problems it is the negated objective
(e.g., $c(\mathbf{s}) = -\,$satisfied demand for EFL, whose goal is to
maximize demand coverage).
Throughout, we adopt the cost-minimization convention without loss of
generality.

\section{Structure-Aware Similarity-Weighted\\ Preference Optimization}

\subsection{Overview}

\begin{figure*}[t]
\centering
\includegraphics[width=\linewidth]{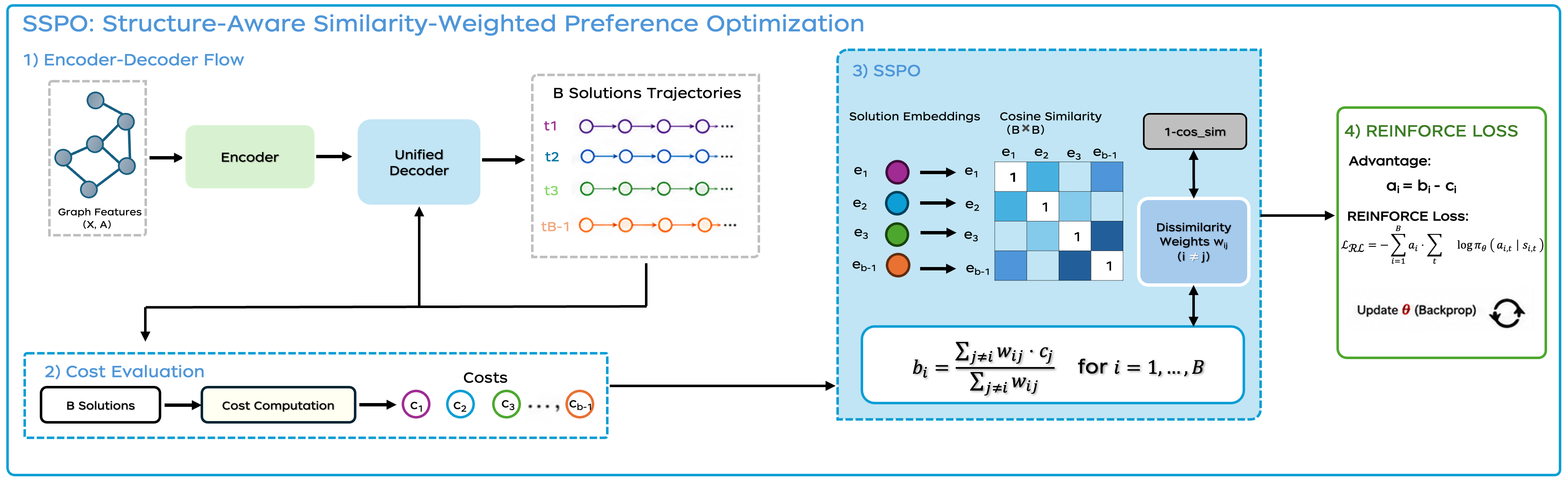}
\caption{Overview of SSPO.
For each instance, $B$ solutions are sampled from the policy.
Zero-parameter solution embeddings are built from the encoder's node
representations via Hadamard products (TSP) or mean-pooling (EFL/JSP),
then centered and used to compute pairwise dissimilarity weights.
The dissimilarity-weighted leave-one-out baseline is used in place of
a uniform baseline in the REINFORCE update.}
\label{fig:overview}
\end{figure*}

For each training instance, SSPO samples $B$ solutions
$\{\mathbf{s}_i\}_{i=1}^B$ with costs $\{c_i\}_{i=1}^B$
from the current policy $\pi_\theta$.
Unlike best-anchor methods that discard all but the optimal sample,
and unlike uniform mean-based baselines that ignore structural diversity,
SSPO constructs a \emph{dissimilarity-weighted leave-one-out} baseline
$b_i^{\text{SSPO}}$ that assigns meaningful, differentiated weights
to \emph{all} $B$ solutions based on their pairwise structural
dissimilarity.
The design rests on two components (Figure~\ref{fig:overview}).
First, we represent each solution by a structure-aware embedding.
It is built directly from the encoder's node representations and adds no parameters.
Second, we turn the pairwise dissimilarities between these embeddings
into leave-one-out baseline weights, so that a structurally distinct
peer contributes more to the baseline than a near-duplicate.
The resulting advantage $A_i = b_i^{\text{SSPO}} - c_i$ then drives a
standard REINFORCE update.

\subsection{Structure-Aware Solution Embeddings}
\label{sec:emb}

To measure structural similarity between two solutions
without any additional learnable parameters, we reuse the graph
encoder's existing node representations
$\{\mathbf{h}_v\}_{v \in V}$ (output of the final attention layer)
and construct solution-level embeddings in a problem-adaptive manner.

\paragraph{TSP Problem.}
In TSP, all solutions visit the same node set; what differs is
\emph{which edges are traversed}.
For a solution route $\mathbf{s}^{(k)} = (s_0, \ldots, s_{T-1})$, with the closing convention $s_T^{(k)} := s_0^{(k)}$,
we compute the \emph{Hadamard product} for each consecutive edge:
\begin{equation}
  \mathbf{e}_t^{(k)} = \mathbf{h}_{s_t^{(k)}} \odot
  \mathbf{h}_{s_{t+1}^{(k)}}, \quad t = 0, \ldots, T-1,
  \label{eq:hadamard}
\end{equation}
and mean-pool over all edges:
\begin{equation}
  \mathbf{z}_k = \frac{1}{T}\sum_{t=0}^{T-1}
  \mathbf{h}_{s_t^{(k)}} \odot \mathbf{h}_{s_{t+1}^{(k)}}.
  \label{eq:hadamard_emb}
\end{equation}

\textbf{Properties of the Hadamard embedding.}
Three properties make the Hadamard product well suited to edge-level
structural encoding:
\begin{itemize}
  \item \emph{Symmetry}: $\mathbf{h}_i \odot \mathbf{h}_j =
    \mathbf{h}_j \odot \mathbf{h}_i$, correctly modeling
    undirected edges in TSP.
  \item \emph{Non-degeneracy}: Unlike simple addition,
    $\frac{1}{T}\sum_{t=0}^{T-1}(\mathbf{h}_{s_t}+\mathbf{h}_{s_{t+1}})
    = \frac{2}{T}\sum_{v\in V}\mathbf{h}_v$ for all-node closed TSP tours,
    yielding \emph{identical} embeddings for any route.
    Hadamard products over distinct node pairs yield irreducible,
    route-specific representations.
  \item \emph{Context-richness}: $\mathbf{h}_v$ is refined through
    $L$ mixed-attention layers integrating global topology, so
    $\mathbf{h}_i \odot \mathbf{h}_j$ encodes the
    \emph{interaction} between nodes $i$ and $j$ in the context of
    the full instance---far richer than raw coordinate differences.
\end{itemize}

\paragraph{EFL and JSP Problems.} In EFL, solution diversity is determined by which facility nodes are
selected; in JSP, by which operations are assigned to which machines.
We directly mean-pool the encoder embeddings of selected decision
nodes:
\begin{equation}
  \mathbf{z}_k = \frac{1}{T}\sum_{t=0}^{T-1}
  \mathbf{h}_{s_t^{(k)}}.
  \label{eq:assign_emb}
\end{equation}
Both embeddings reuse existing encoder representations with zero
additional parameters.

\textcolor{red}{\paragraph{Centering to remove a shared global direction.}}
Mean-pooling introduces a \emph{global base direction} shared by all
$B$ solutions (since every solution draws from the same encoder
embeddings of the same instance).
Without correction, all pairwise cosine similarities cluster near
$1.0$, making solutions appear structurally identical and collapsing
the weight distribution to near-uniform---effectively collapsing SSPO
to a structure-blind uniform baseline.
We therefore remove the shared direction by centering:
\begin{equation}
  \tilde{\mathbf{z}}_k = \mathbf{z}_k -
  \frac{1}{B}\sum_{j=1}^{B} \mathbf{z}_j.
  \label{eq:centering}
\end{equation}
After centering, cosine similarity purely captures
\emph{relative} structural differences among solutions, yielding a
discriminative and numerically stable measure.

\subsection{The Dissimilarity-Weighted Leave-One-Out Baseline}
\label{sec:baseline}

We now turn the centered embeddings into baseline weights. We compute pairwise cosine similarities on the centered embeddings:
\begin{equation}
  S_{ij} = \frac{\tilde{\mathbf{z}}_i^\top \tilde{\mathbf{z}}_j}
  {\|\tilde{\mathbf{z}}_i\|\,\|\tilde{\mathbf{z}}_j\|},
  \quad S_{ij} \leftarrow \mathrm{clamp}(S_{ij},\;0,\;1).
  \label{eq:cosine}
\end{equation}
The SSPO \emph{dissimilarity weight} for peer $j$ relative to
solution $i$ is:
\begin{equation}
  w_{ij} = \frac{1 - S_{ij}}{\displaystyle\sum_{k \neq i}(1 - S_{ik})},
  \quad w_{ii} = 0.
  \label{eq:weight}
\end{equation}
Intuitively, a peer that is \emph{structurally different} from
$\mathbf{s}_i$ represents a genuinely alternative strategy; its cost
carries far more information about the value landscape than a
near-duplicate.
The SSPO baseline for solution $i$ is:
\begin{equation}
  b_i^{\text{SSPO}} = \sum_{j \neq i} w_{ij}\,c_j.
  \label{eq:sspo_baseline}
\end{equation}

\paragraph{Graceful degradation to uniform baseline.}
When all solutions are structurally identical ($S_{ij} = c$ for all
$j \neq i$), the numerators $(1 - S_{ij})$ are equal and
$w_{ij} = 1/(B-1)$, recovering uniform leave-one-out weighting exactly.
In practice, a small $\epsilon = 10^{-8}$ is added to the denominator
of Eq.~\eqref{eq:weight} for numerical stability.

The advantage for solution $i$ is:
\begin{equation}
  A_i = b_i^{\text{SSPO}} - c_i.
  \label{eq:advantage}
\end{equation}
The SSPO policy gradient loss is:
\begin{equation}
  \mathcal{L} = -\frac{1}{B}\sum_{i=1}^{B}
  \mathrm{sg}(A_i)
  \sum_{t=0}^{T-1}\log\pi_\theta\!\left(a_t^{(i)} \mid s_t^{(i)}\right)
  - \beta\,\bar{H}(\pi_\theta),
  \label{eq:loss}
\end{equation}
where $\mathrm{sg}(\cdot)$ is stop-gradient (the baseline does not
receive gradients through embeddings), and $\bar{H}(\pi_\theta)$ is
an optional entropy bonus weighted by $\beta \geq 0$ to encourage
exploration.

The full procedure is summarized in Algorithm~\ref{alg:sspo}.

\begin{algorithm}[t]
\caption{SSPO Training}
\label{alg:sspo}
\begin{algorithmic}[1]
  \REQUIRE Policy $\pi_\theta$, group size $B$, entropy weight $\beta$
  \REPEAT
    \STATE Sample a mini-batch of instances $\{\mathcal{G}^{(m)}\}$
    \FOR{each instance $\mathcal{G}^{(m)}$}
      \STATE Sample $B$ solutions; compute costs $\{c_k\}$
      \STATE Obtain node embeddings $\{\mathbf{h}_v\}$ from encoder
      \STATE Build $\{\mathbf{z}_k\}$ via
        Eq.~\eqref{eq:hadamard_emb} or \eqref{eq:assign_emb}
      \STATE Center: $\tilde{\mathbf{z}}_k$ via Eq.~\eqref{eq:centering}
      \STATE Compute $S_{ij}$ via Eq.~\eqref{eq:cosine}
      \STATE Compute $w_{ij}$ via Eq.~\eqref{eq:weight}
      \STATE Compute $b_i^{\text{SSPO}}$ via Eq.~\eqref{eq:sspo_baseline}
    \ENDFOR
    \STATE Update $\theta$ with $\mathcal{L}$ via Eq.~\eqref{eq:loss}
  \UNTIL{convergence}
\end{algorithmic}
\end{algorithm}

\subsection{Complexity Analysis}

Beyond the standard REINFORCE forward pass, SSPO adds two operations:
(i) solution-level embedding construction via Hadamard products (TSP)
or mean-pooling (EFL/JSP), with cost $\mathcal{O}(B N D)$; and
(ii) a $B \times B$ cosine-similarity matrix, with cost
$\mathcal{O}(B^2 D)$.
For the settings used throughout our experiments, both are negligible compared to
the encoder--decoder forward pass ($\mathcal{O}(N^2 D L)$ per rollout,
executed $B$ times per instance).
SSPO introduces no additional learnable parameters beyond those of
the underlying policy network.

\subsection{Variance Trade-off Relative to RLOO}\label{sec:var_sketch}
The introduction identified \emph{baseline redundancy} as a
principal failure mode: when co-sampled peers are strongly correlated,
a uniformly weighted leave-one-out baseline loses variance-reduction
power. We now make this intuition precise, and characterize when
SSPO's dissimilarity weighting yields a  lower-variance
baseline than uniform RLOO.
Fix one training instance and one
reference rollout $i$. Let
\begin{equation*}
    \mathcal{J}_i
    =
    \{1,\ldots,B\}\setminus\{i\},
    \qquad
    m=B-1.
    \label{eq:peer_index_set}
\end{equation*}
Consider the general convex leave-one-out baseline
\begin{equation*}
    b_i(w)
    =
    \sum_{j\in\mathcal{J}_i} w_j c_j,
    \qquad
    w_j\geq 0,
    \qquad
    \sum_{j\in\mathcal{J}_i} w_j=1.
    \label{eq:general_weighted_loo}
\end{equation*}
Uniform RLOO is obtained by setting $w_j=1/m$:
\begin{equation*}
    b_i^{\mathrm{RLOO}}
    =
    \frac{1}{m}
    \sum_{j\in\mathcal{J}_i} c_j.
    \label{eq:rloo_baseline_variance_section}
\end{equation*}

To obtain a closed-form comparison, we adopt a standard
exchangeability-style assumption on peer costs. It is a mild
symmetry condition: peers drawn from the same policy on the same
instance share a marginal variance, and only their pairwise
correlations $\rho_{jk}$ vary---exactly the quantities SSPO's
structure-aware weights aim to exploit.

\begin{assumption}
\label{assumption:var_cov}
Peer costs have equal marginal variance and pairwise
covariances of the form
\begin{equation}
    \operatorname{Var}(c_j)=\sigma^2,
    \qquad
    \operatorname{Cov}(c_j,c_k)
    =
    \sigma^2\rho_{jk},
    \quad j\neq k.
    \label{eq:equal_marginal_assumption}
\end{equation}
\end{assumption}
Under Assumption~\ref{assumption:var_cov}, we obtain a closed-form expression for the variance gap introduced by the
SSPO weights, decomposed into a marginal-variance penalty and a signed
covariance-term reduction.
\begin{proposition}
\label{prop:exact_variance_tradeoff}
Variance
difference between the weighted leave-one-out baseline and
uniform RLOO is
\begin{align}
    &\operatorname{Var}\!\left(b_i(w)\right)
    -
    \operatorname{Var}\!\left(
        b_i^{\mathrm{RLOO}}
    \right)
    \nonumber\\
    = & 
    \sigma^2
    \Bigg[
        \underbrace{
            \sum_{j\in\mathcal{J}_i} w_j^2
            -
            \frac{1}{m}
        }_{\text{marginal-variance penalty}~\ge~0}
        -
        \underbrace{
            \sum_{\substack{j,k\in\mathcal{J}_i\\j\neq k}}
            \left(
                \frac{1}{m^2}
                -
                w_jw_k
            \right)
            \rho_{jk}
        }_{\text{signed covariance-term change}}
    \Bigg].
    \label{eq:variance_gap}
\end{align}
Consequently,
\begin{equation*}
    \operatorname{Var}\!\left(b_i(w)\right)
    \leq
    \operatorname{Var}\!\left(
        b_i^{\mathrm{RLOO}}
    \right)
    \label{eq:variance_dominance}
\end{equation*}
if and only if
\begin{equation*}
    \sum_{\substack{j,k\in\mathcal{J}_i\\j\neq k}}
    \left(
        \frac{1}{m^2}
        -
        w_jw_k
    \right)
    \rho_{jk}
    \geq
    \sum_{j\in\mathcal{J}_i}w_j^2
    -
    \frac{1}{m}.
    \label{eq:variance_reduction_condition}
\end{equation*}
\end{proposition}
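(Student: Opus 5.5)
The plan is to expand both variances with the bilinear formula for the variance of a linear combination, under Assumption~\ref{assumption:var_cov}, and subtract. One caveat is treated as an explicit modelling convention: the weights $w_j$ are held fixed (non-random) when the variance is taken. In SSPO they depend on the sampled embeddings, so the proposition should be read conditionally on the realized structural weights.

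\textbf{Step 1: expand each variance.} For any fixed $w$ we have
\[
\operatorname{Var}\bigl(b_i(w)\bigr)=\sum_{j\in\mathcal{J}_i}w_j^2\operatorname{Var}(c_j)+\sum_{\substack{j,k\in\mathcal{J}_i\\ j\neq k}}w_jw_k\operatorname{Cov}(c_j,c_k).
\]
Under Assumption~\ref{assumption:var_cov} this becomes
\[
\sigma^2\Bigl[\sum_j w_j^2+\sum_{j\neq k}w_jw_k\rho_{jk}\Bigr].
\]
Specializing to $w_j=1/m$ gives
\[
\operatorname{Var}\bigl(b_i^{\mathrm{RLOO}}\bigr)=\sigma^2\Bigl[\tfrac{1}{m}+\sum_{j\neq k}\tfrac{1}{m^2}\rho_{jk}\Bigr].
\]

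\textbf{Step 2: subtract.} Taking the difference and grouping the diagonal and off-diagonal terms yields exactly \eqref{eq:variance_gap}.

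\textbf{Step 3: sign of the marginal-variance penalty.} The label ``$\ge 0$'' needs its own argument. By Cauchy--Schwarz on the simplex,
\[
1=\Bigl(\sum_j w_j\Bigr)^2\le m\sum_j w_j^2,
\]
so $\sum_j w_j^2\ge 1/m$. Equality holds iff $w$ is uniform.

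\textbf{Step 4: the iff condition.} Since $\sigma^2>0$, the variance gap is $\le 0$ iff the bracket in \eqref{eq:variance_gap} is $\le 0$. Rearranging that inequality gives the stated condition. If $\sigma^2=0$, both sides vanish and the inequality between variances holds trivially; this degenerate case can be noted separately.

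\textbf{Main obstacle.} There is no real analytic obstacle; the argument is bookkeeping. The only delicate points are two. First, the ordered-pair sum over $j\neq k$ must be indexed consistently in both expansions, so that no factor of $2$ appears. Second, the randomness of $w$ must be stated clearly: if $w$ were allowed to depend on the costs, extra covariance terms would arise and the closed form would fail. I would therefore state explicitly that $w$ is treated as fixed, or equivalently that the statement is conditional on the sampled structures.
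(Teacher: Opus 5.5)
Your argument is correct and is essentially the paper's proof: for a fixed weight vector, expand $\operatorname{Var}(b_i(w))$ as a bilinear form over ordered pairs $j\neq k$, substitute Assumption~\ref{assumption:var_cov}, specialize to $w_j=1/m$, and subtract. Your Cauchy--Schwarz step for $\sum_j w_j^2\ge 1/m$ and your division by $\sigma^2>0$ make explicit what the paper either asserts after the proof or leaves implicit; note that for $\sigma^2=0$ the ``only if'' direction actually fails, because the $\rho_{jk}$ are then unconstrained, so the iff should be read with $\sigma^2>0$.
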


\begin{proof}
For a fixed weight vector $w$, the variance of the weighted
baseline is
\begin{align*}
    \operatorname{Var}\!\left(b_i(w)\right)
    &=
    \sum_{j\in\mathcal{J}_i}
    w_j^2\operatorname{Var}(c_j)+
    \sum_{\substack{j,k\in\mathcal{J}_i\\j\neq k}}
    w_jw_k
    \operatorname{Cov}(c_j,c_k).
    \label{eq:weighted_baseline_variance_expansion}
\end{align*}
Substituting 
\begin{equation*}
    \operatorname{Var}(c_j)=\sigma^2,
    \qquad
    \operatorname{Cov}(c_j,c_k)
    =
    \sigma^2\rho_{jk},
    \quad j\neq k.
    \label{eq:equal_marginal_assumption}
\end{equation*}
gives
\begin{equation*}
    \operatorname{Var}\!\left(b_i(w)\right)
    =
    \sigma^2
    \left(
        \sum_{j\in\mathcal{J}_i}w_j^2
        +
        \sum_{\substack{j,k\in\mathcal{J}_i\\j\neq k}}
        w_jw_k\rho_{jk}
    \right).
    \label{eq:weighted_baseline_variance}
\end{equation*}
For uniform RLOO, $w_j=1/m$.
Then we have
\begin{align*}
    &\operatorname{Var}\!\left(b_i(w)\right)
    -
    \operatorname{Var}\!\left(
        b_i^{\mathrm{RLOO}}
    \right)
    \nonumber\\
    &=
    \sigma^2
    \Bigg[
        \sum_{j\in\mathcal{J}_i}w_j^2
        -
        \frac{1}{m}
        -
        \sum_{\substack{j,k\in\mathcal{J}_i\\j\neq k}}
        \left(
            \frac{1}{m^2}-w_jw_k
        \right)
        \rho_{jk}
    \Bigg].
    \label{eq:variance_gap_appendix}
\end{align*}
\end{proof}

Uniform RLOO minimizes $\sum_j w_j^2$ over the probability
simplex. Therefore, the first term in~\eqref{eq:variance_gap} is always nonnegative: any deviation from uniform weights incurs a
positive marginal-variance penalty. 
The second term is an aggregate covariance adjustment and can have either sign. For a positively correlated peer pair satisfying $w_jw_k<1/m^2$, the quantity $(1/m^2-w_jw_k)\rho_{jk}$ is positive, and contributes to reducing the variance difference. The benefit becomes larger as the correlation $\rho_{jk}$ increases and as $w_jw_k$ falls further below $1/m^2$. Therefore SSPO achieves a lower baseline variance when the resulting aggregate covariance reduction is sufficient to offset the marginal-variance penalty induced by nonuniform weights.

Hence SSPO has lower baseline variance precisely when its structure-aware
weights sufficiently reduce the contribution of highly
correlated peer pairs to offset the penalty caused by weight
concentration---the regime our experiments (Section~\ref{sec:analysis}) identify as structurally diverse.

\section{Experiments}

\subsection{Experimental Setup}

We evaluate SSPO on three problem families chosen to probe
different assumptions in preference-optimization NCO training.
\emph{TSP} ($N \in \{50, 100\}$, uniform $[0,1]^2$; gap vs.\
LKH-3~\cite{helsgaun2017extension}) is edge-structured, with solutions
that share most edges early in training---the regime where baseline
redundancy is most severe.
\emph{EFL} (Entity Facility Location) is a real-world facility
placement benchmark with coverage and demand constraints; its solution
space is assignment-dominated, so co-sampled solutions can occupy
completely disjoint regions and best-anchor methods risk locking onto
a single suboptimal mode.
\emph{JSP} (Job-Shop Scheduling) spans three standard families
(LA/TA/DMU) whose difficulty and instance size vary widely, letting
us probe how each method scales with problem hardness.

\textbf{Fair comparison protocol.} For every problem--method pair, all
methods share the identical backbone encoder ($L=6$ mixed-attention
layers, 8 heads, hidden dimension $D=256$), the same random
initialization, the same group size $B=256$, and the same optimizer
schedule. The only difference across methods is the baseline
formulation. This isolates the effect of the baseline design---the
central object of study in this paper---from confounders such as
architecture width, sampling strategy, or curriculum. Optimality gaps
are reported against LKH-3 (TSP), the best result among all compared
methods (EFL, for which exact optimization is computationally intractable), and
best-known solutions (JSP).

\textbf{Baselines and ablation.} We compare SSPO against three
categories of prior work: (i) \textbf{BOPO}~\cite{liao2025bopo}, the current
state-of-the-art best-anchor preference-optimization method for NCO;
(ii) \textbf{POMO}~\cite{kwon2020pomo} and \textbf{Sym-NCO}~\citep{kim2022symnco} for TSP,
and \textbf{SLIM}~\cite{NEURIPS2024_be8987a7}  for JSP, as representative structure-blind
baselines; and (iii) \textbf{uniform RLOO}, which shares SSPO's
leave-one-out form but sets $w_{ij}=1/(B-1)$. RLOO is critical for
our analysis: comparing SSPO with RLOO isolates the contribution of
\emph{structure-aware weighting}, while comparing RLOO with BOPO
isolates the contribution of the \emph{leave-one-out form} itself.
Together the three-way comparison exposes the trade-off that
motivates SSPO's design (Section~\ref{sec:tradeoff}).

\subsection{Main Results}

Tables~\ref{tab:tsp_main}--\ref{tab:jsp_main} show that SSPO
attains the best average optimality gap on \emph{five out of six}
main-benchmark settings, and is within 0.05pp of the best result on the sixth.
Beyond the raw ranking, the tables reveal a consistent qualitative
pattern across the three problem families, which we discuss in turn.

\paragraph{TSP.}
On TSP, SSPO improves over BOPO at both scales, and the margin grows
with instance size: from 0.01pp on TSP-50 (0.14\% vs.\ 0.15\%) to
0.08pp on TSP-100 (0.40\% vs.\ 0.48\%).
This size dependence is the signature predicted by our variance
analysis (Section~\ref{sec:var_sketch}): larger instances admit more
topologically distinct co-sampled solutions, so the signed covariance-term reduction
in Proposition~\ref{prop:exact_variance_tradeoff} grows, and the
dissimilarity weighting has more leverage to offset the marginal-variance penalty.

\begin{table}[t]
\centering
\caption{TSP optimality gap (\%). Best in \textbf{bold}.}
\label{tab:tsp_main}
\begin{tabular}{lcc}
\toprule
Method & TSP-50 & TSP-100 \\
\midrule
POMO                       & 0.21\%          & 1.04\% \\
Sym-NCO                    & ---          & 0.94\% \\ 
BOPO                       & 0.15\%          & 0.48\% \\
RLOO                       & 0.33\%               & 1.32\% \\
\midrule
\textbf{SSPO (ours)}       & \textbf{0.14\%} & \textbf{0.40\%} \\
\midrule
 LKH-3                       & 0\%               & 0\% \\
\bottomrule
\end{tabular}
\end{table}

\paragraph{EFL.}
On EFL, we report the relative gap to the best result among all
compared methods. SSPO attains this best result (0\% relative gap),
while BOPO and POMO trail by 7.31\% and 7.34\% respectively. The gap here is
much larger than on TSP, and the cause is structural rather than
numerical: the assignment-dominated solution space is far wider than
an edge-structured tour, so a best-anchor method that concentrates
its learning signal on a single top solution tends to lock the policy
into a suboptimal mode. By scoring all $B$ solutions jointly and
weighting them by structural dissimilarity, SSPO keeps the whole
group informative throughout training.

\begin{table}[t]
\centering
\caption{EFL relative gap (\%) to the best result among all compared
methods. Best in \textbf{bold}.}
\label{tab:efl_main}
\begin{tabular}{lc}
\toprule
Method & Gap to Best \\
\midrule
POMO                 & 7.34\% \\
BOPO                 & 7.31\% \\
RLOO                 & 2.64\% \\
\midrule
\textbf{SSPO (ours)} & \textbf{0\%} \\
\bottomrule
\end{tabular}
\end{table}

We further probe SSPO on production-scale
facility-placement cases at JD.com, whose demand and coverage distributions are considerably less regular than the standard EFL benchmark. Figure~\ref{fig:efl_cases} shows two representative
real-world instances. In both, the SSPO policy places facilities
so that demand coverage stays consistent with the pattern reported on
the benchmark, confirming that the method transfers beyond the
regularized EFL setting.

\begin{figure*}[t]
\centering
\includegraphics[width=\linewidth]{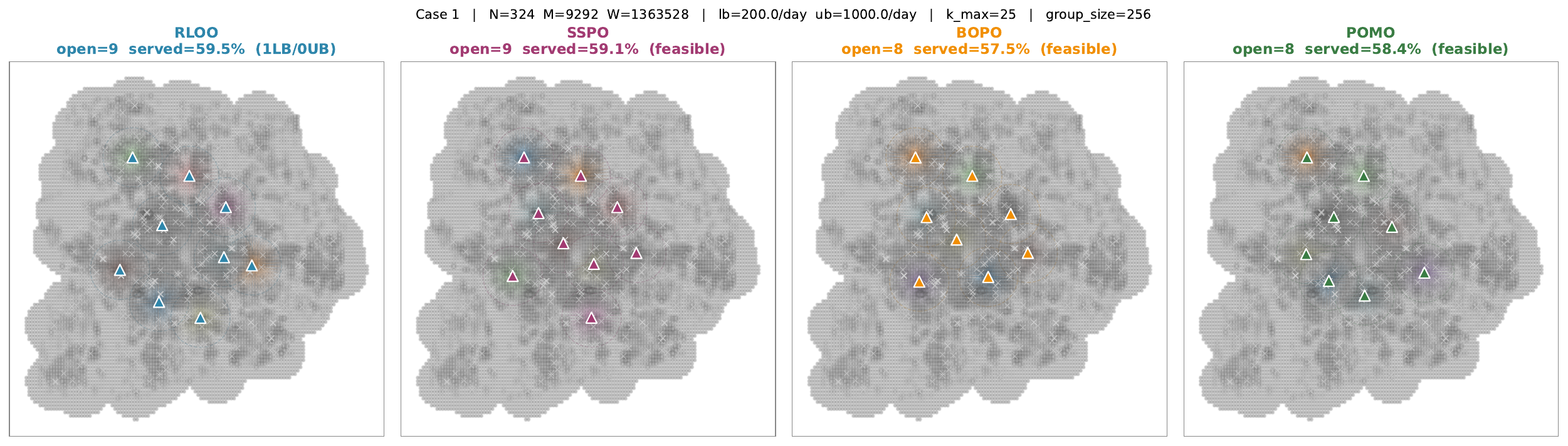}\\[4pt]
\includegraphics[width=\linewidth]{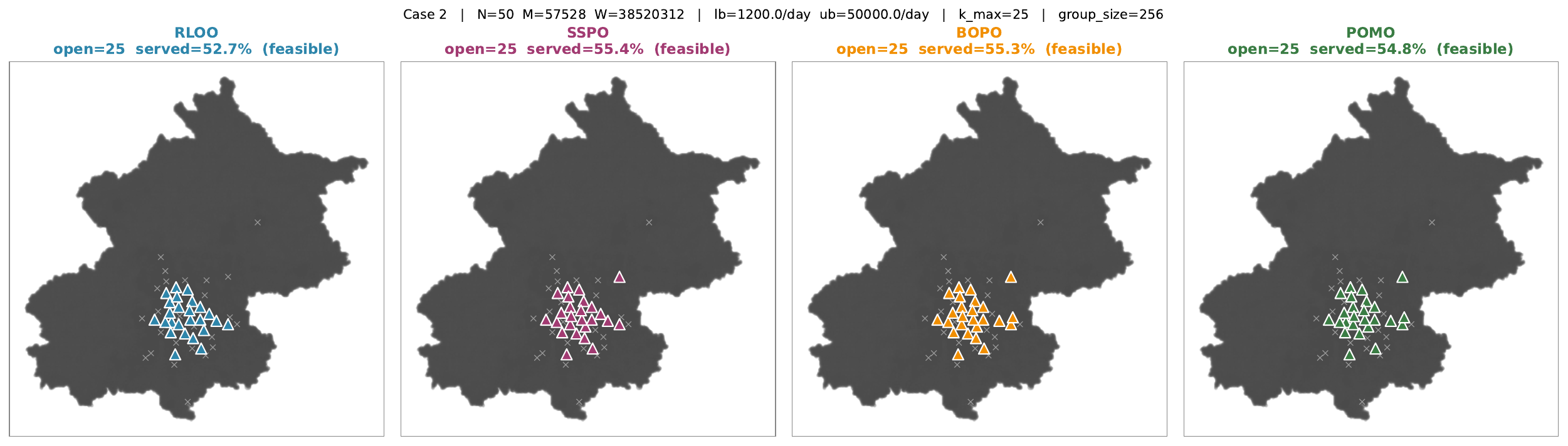}
\caption{Comparison of real-world EFL facility-placement solutions produced by RLOO, SSPO, BOPO, and POMO.
These production-scale cases have far less regular demand and coverage
distributions than the standard EFL benchmark, yet SSPO yields
placements consistent with the benchmark results.}
\label{fig:efl_cases}
\end{figure*}

\paragraph{JSP.}
On the harder TA and DMU families SSPO improves over BOPO by 1.14pp
(7.52\% vs.\ 8.66\%) and 0.86pp (12.95\% vs.\ 13.81\%). On the easier
LA family BOPO holds a marginal 0.05pp edge (2.50\% vs.\ 2.55\%).
This reversal is consistent with our analysis rather than adverse to
it: on compact instances the policy settles into a tight mode early,
so structural diversity among co-sampled peers is low. Consequently, the signed covariance-term reduction in Proposition 1 shrinks toward zero, and SSPO approaches uniform RLOO. The residual 0.05pp gap matches the
small marginal-variance penalty $\sum_j w_{ij}^2 - 1/(B-1)$ that SSPO
pays whenever its weights deviate from uniform.

\begin{table}[t]
\centering
\caption{JSP average optimality gap (\%) across benchmark families.
Best in \textbf{bold}.}
\label{tab:jsp_main}
\begin{tabular}{lccc}
\toprule
Method & LA Avg & TA Avg & DMU Avg \\
\midrule
SLIM                  & 2.59\% & 7.84\%          & 13.11\% \\
BOPO                 & \textbf{2.50\%} & 8.66\%          & 13.81\% \\
\midrule
\textbf{SSPO (ours)} & 2.55\%          & \textbf{7.52\%} & \textbf{12.95\%} \\
\bottomrule
\end{tabular}
\end{table}
\subsection{Analysis}
\label{sec:analysis}

\paragraph{Best-anchor vs.\ uniform-LOO trade-off.}
\label{sec:tradeoff}
The most informative comparison in Tables~\ref{tab:tsp_main} and
\ref{tab:efl_main} is not SSPO vs.\ BOPO, but BOPO vs.\ RLOO---the two
baseline designs that SSPO subsumes---because their ranking reverses
across problem families. On TSP-100 the best-anchor signal is strong:
BOPO (0.48\%) beats RLOO (1.32\%) by $2.75\times$, since co-sampled
tours share many edges and anchoring on the top tour concentrates the
gradient on a meaningful direction. On EFL the picture inverts: RLOO
(2.64\%) beats BOPO (7.31\%) by $2.8\times$, because the assignment
space is wide and anchoring on one solution strands the policy in a
local mode. Neither baseline is uniformly correct; each fixes one
failure mode and aggravates the other. SSPO is designed around this
trade-off---it keeps the leave-one-out form to avoid best-anchor
lock-in while re-weighting peers by structural dissimilarity to avoid
the redundant baseline of uniform RLOO---and it improves over the
better of the two in both regimes (0.40\% vs.\ 0.48\% on TSP-100;
0\% vs.\ 2.64\% on EFL).

\paragraph{Effect of structural diversity.}
\label{sec:experiments_when}
The size of SSPO's advantage tracks how structurally diverse the
co-sampled solutions are. Where diversity is high---EFL, and the
larger TSP and harder JSP instances---SSPO's gain over BOPO is
largest, matching Proposition~\ref{prop:exact_variance_tradeoff}: the signed covariance-term reduction $\sum_{j\neq k}(1/m^2 - w_jw_k)\rho_{jk}$ scales with peer dissimilarity, so the signal
SSPO recovers grows with the diversity that best-anchor methods miss.
Where diversity is low, as on the compact JSP LA family, the bound
tightens to equality and SSPO's margin shrinks to a marginal gap of 0.05pp.

\paragraph{Ablation: structure-aware weighting.}
Replacing SSPO's dissimilarity weights with uniform weights (i.e.,
recovering plain RLOO) on TSP-100 degrades the gap from 0.40\% to
1.32\%---a $3.3\times$ deterioration on the same architecture,
initialization, and rollouts. This isolates the source of the gain.
The leave-one-out form alone is insufficient: uniform RLOO
underperforms even BOPO (1.32\% vs.\ 0.48\%), so the improvement does
not come from excluding the current solution's own cost from the baseline. The gain
comes from the weighting itself---substituting uniform
$w_{ij}=1/(B-1)$ with the dissimilarity-driven $w_{ij}\propto 1-S_{ij}$,
with no other change, accounts for the improvement from 1.32\% to 0.40\%.
Together with the trade-off analysis above, this attributes SSPO's
improvement to structure-aware weighting as a design choice rather
than to the leave-one-out form or a tuned encoder, exactly as
Section~\ref{sec:baseline} anticipates.

\section{Conclusion}

We presented \textbf{SSPO}, which reweights the leave-one-out
baseline by structural dissimilarity between co-sampled solutions.
This single mechanism addresses two orthogonal failure modes in NCO
training---\emph{gradient signal polarization} and \emph{baseline
redundancy}---while remaining zero-parameter and preserving the
leave-one-out form.
Experiments on TSP, EFL, and JSP show consistent gains over
best-anchor and uniform-weight baselines, and a direct comparison
against uniform RLOO isolates structure-aware weighting as the
primary driver.
More broadly, our results suggest that the structural information
encoded in the graph representations---so far used only for action
selection---is a rich, underused signal for variance reduction,
opening the door to structure-aware training beyond the specific
baselines studied here.

\appendix
\bibliography{references}

\end{document}